\documentclass[conference]{IEEEtran}
\IEEEoverridecommandlockouts
\usepackage{cite}
\usepackage{amsmath,amssymb,amsfonts}
\usepackage{algorithmic}
\usepackage[ruled,vlined]{algorithm2e}
\usepackage{graphicx}
\usepackage{textcomp}
\usepackage{xcolor}
\usepackage{amsthm}
\newtheorem{theorem}{Theorem}

\usepackage{booktabs}
\def\BibTeX{{\rm B\kern-.05em{\sc i\kern-.025em b}\kern-.08em
    T\kern-.1667em\lower.7ex\hbox{E}\kern-.125emX}}
\begin{document}

\title{KC-Agent: A Dual-Process Cognitive Architecture for Efficient ML Model Improvement\\
}

\author{\IEEEauthorblockN{
Gusseppe Bravo-Rocca\IEEEauthorrefmark{1}\IEEEauthorrefmark{4},
Jordi Guitart\IEEEauthorrefmark{1}\IEEEauthorrefmark{2},
Ajay Dholakia\IEEEauthorrefmark{3},
David Ellison\IEEEauthorrefmark{3},
Puneet Jain\IEEEauthorrefmark{4}
}
\vspace{1.5mm}
\IEEEauthorblockA{\IEEEauthorrefmark{1}Barcelona Supercomputing Center, Barcelona, Spain}
\IEEEauthorblockA{\IEEEauthorrefmark{2}Universitat Polit\`ecnica de Catalunya, Barcelona, Spain}
\IEEEauthorblockA{\IEEEauthorrefmark{3}Lenovo Infrastructure Solutions Group, Morrisville, NC, United States}
\IEEEauthorblockA{\IEEEauthorrefmark{4}Cotiviti, South Jordan, UT, United States}
\vspace{1.5mm}
\IEEEauthorblockA{Email: \{gusseppe.bravo, jordi.guitart\}@bsc.es, puneet.jain@cotiviti.com, \\ \{adholakia, dellison\}@lenovo.com}
}
\maketitle

\begin{abstract}
Data drift poses significant challenges for machine learning systems in production, requiring continuous model updates to maintain performance. We present KC-Agent, a dual-process cognitive architecture for automated ML model improvement that combines fast pattern recognition (System 1) with deliberate incremental updates (System 2). Our approach implements structured memory systems enabling System 1 to leverage successful solutions previously discovered by System 2, achieving efficient pattern-based responses without costly re-computation. KC-Agent incorporates atomic change principles and rollback capabilities to ensure reliable, verifiable updates in production environments. We evaluate our method on five datasets including real-world NASA turbofan data with authentic temporal degradation and synthetic datasets with controlled drift scenarios. KC-Agent achieves state-of-the-art performance (76.8\% accuracy) while maintaining optimal efficiency (13.2s execution time), outperforming established cognitive architectures: CodeAct (+2.4\%), Tree of Thoughts (+3.6\%), ReAct (+8.0\%), and Reflexion (+8.9\%). Consensus evaluation by a panel of state-of-the-art LLMs confirms superior strategic efficacy (8.33/10 Smartness score), significantly outperforming baseline agents. The knowledge consolidation mechanism delivers 91\% speedup over the slow variant while maintaining higher accuracy. Our approach demonstrates both theoretical foundations and practical viability for cognitive-inspired automated ML improvement systems capable of handling complex real-world data drift scenarios.
\end{abstract}

\begin{IEEEkeywords}
LLM agents, data drift, model maintenance, dual-process architecture
\end{IEEEkeywords}

\section{Introduction}

Machine Learning (ML) models in production environments face continuous challenges from evolving data distributions, changing operational requirements, and performance degradation over time \cite{monitoring_ml_models}. Traditional approaches to model maintenance rely heavily on manual intervention by ML engineers, creating bottlenecks that limit the responsiveness and scalability of deployed AI systems \cite{eck2022monitoring}. The complexity of modern ML pipelines, combined with the subtle nature of gradual data drift, makes manual monitoring and updating increasingly impractical for large-scale deployments \cite{lipton2018detecting}.

The emergence of Large Language Models (LLMs) has created new opportunities for automating complex software engineering tasks, including code generation, debugging, and system optimization \cite{chen2021evaluatinglargelanguagemodels}. LLM-based autonomous agents show particular promise for ML model improvement due to their ability to reason about code, understand ML concepts, and generate adaptive solutions. However, developing reliable agents for automated model improvement presents fundamental challenges that current approaches have not adequately addressed \cite{liu2023agentbenchevaluatingllmsagents,deng2023mind2webgeneralistagentweb}.

The core challenge lies in balancing two competing requirements: rapid response to critical performance drops versus careful, systematic improvement that avoids unintended consequences. Current LLM-based approaches typically employ single-strategy reasoning \cite{shinn2023reflexion}, forcing a trade-off between speed and thoroughness. Fast approaches risk making hasty changes that degrade performance, while deliberative approaches may respond too slowly to critical issues \cite{gao2023largelanguagemodelsempowered}. Most existing methods lack memory mechanisms to learn from previous improvement attempts, leading to inefficient repeated exploration of unsuccessful strategies \cite{generative_agents}.

To address these limitations, we introduce KC-Agent, short for Kahneman-Clear Agent, a dual-process architecture that combines fast pattern recognition with deliberate incremental improvement. Our approach draws inspiration from Kahneman's dual-process theory \cite{kahneman2011thinking}, which describes human cognition through two complementary systems: fast, intuitive responses (System 1) and slow, deliberate reasoning (System 2). We integrate this cognitive framework with atomic improvement principles \cite{clear2018atomic} to create an agent architecture that can both respond quickly to critical issues and systematically enhance model performance over time.

\begin{figure}[t]
\centering
\includegraphics[width=1.0\linewidth]{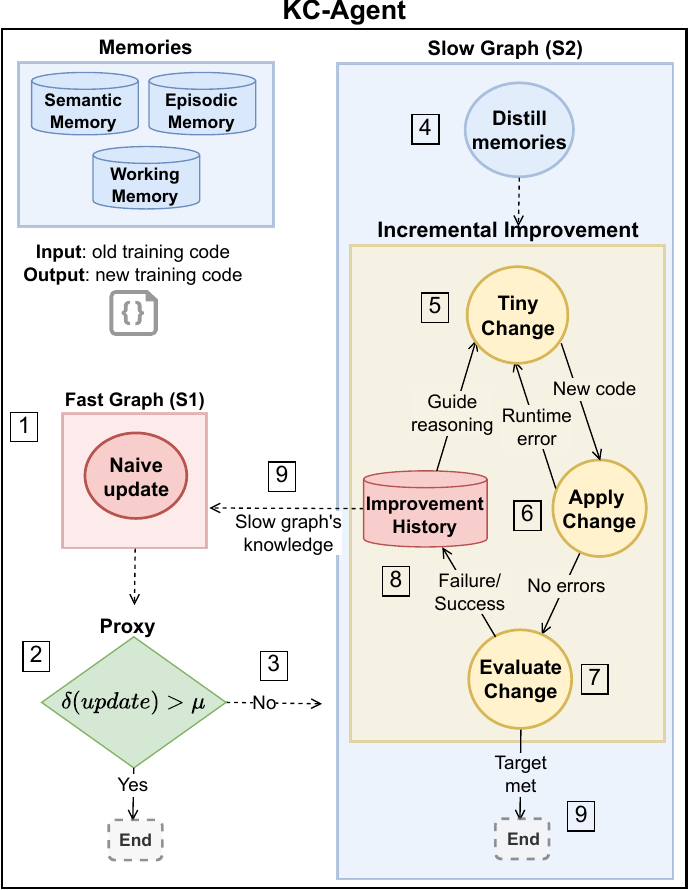}
\caption{KC-Agent dual-process architecture showing the numbered workflow from fast naive updates (1-2) through proxy evaluation to slow deliberate improvement (4-8). System 1 handles immediate responses using pattern recognition (reusing previously successful solutions), while System 2 performs incremental improvements through memory distillation and atomic change cycles. Knowledge transfer (9) enables System 1 to leverage System 2's solutions for efficient future pattern matching (identifying similar scenarios).}
\label{fig:improver_agent}
\end{figure}

Our key insight is that effective automated model improvement requires cognitive flexibility: the ability to switch between rapid pattern-based responses and methodical systematic improvement depending on the situation. By incorporating structured memory systems and learning mechanisms, KC-Agent accumulates expertise over time, becoming more efficient as it encounters similar improvement scenarios. The atomic nature of changes ensures that improvements are verifiable and reversible, providing reliability guarantees essential for production environments.

The contribution of this paper is threefold: 1) we introduce a dual-process architecture that combines fast pattern recognition with deliberate incremental improvement for automated ML model maintenance, 2) we formalize mathematical foundations and establish theoretical guarantees for monotonic improvement under data drift, and 3) we demonstrate superior performance across five datasets while achieving 91\% computational speedup through knowledge consolidation mechanisms.

The rest of this paper is organized as follows. Section II reviews related work on LLM-based agents, cognitive architectures, and automated ML maintenance. Section III presents the KC-Agent architecture and its theoretical foundations. Section IV describes our experimental setup. Section V reports our results, followed by discussion in Section VI. Section VII addresses limitations and future work, and Section VIII concludes the paper.


\section{Related Work}

Our work builds upon three intersecting research areas: LLM-based autonomous agents, cognitive architectures for AI systems, and automated ML model maintenance.

\subsection{LLM-based Autonomous Agents}

Recent work in LLM-based autonomous agents has produced various architectures for complex reasoning and action \cite{liu2023agentbenchevaluatingllmsagents}. ReAct \cite{yao2022react} introduces reasoning-action interleaving for dynamic problem-solving. Tree of Thoughts \cite{yao2023treethoughtsdeliberateproblem} explores multiple reasoning paths simultaneously, while Reflexion \cite{shinn2023reflexion} incorporates self-reflection mechanisms to learn from failures. More sophisticated approaches include Self-Discover \cite{zhou2024selfdiscoverlargelanguagemodels}, which develops task-specific reasoning structures, and Plan-and-Solve \cite{wang2023plan}, which implements explicit planning phases. CodeAct \cite{wang2024codeact} represents a recent paradigm shift by using executable Python code as a unified action space rather than JSON or text-based formats. Integrated with a Python interpreter, CodeAct enables dynamic action revision through multi-turn interactions and self-debugging, achieving up to 20\% higher success rates on complex multi-tool tasks. This code-centric approach is particularly relevant to ML model improvement where generating and executing training code is fundamental.

While these approaches demonstrate structured reasoning capabilities, they typically focus on general problem-solving rather than specialized domains like ML model maintenance. Most existing architectures lack systematic approaches for learning from improvement history or transferring successful strategies across scenarios \cite{generative_agents}. Additionally, few frameworks provide theoretical guarantees for reliability or monotonic improvement, critical for automated systems operating on production ML models.

\subsection{Cognitive Architectures for AI Systems}

Cognitive science principles have increasingly influenced AI system design, particularly hybrid architectures balancing different reasoning modes. SwiftSage \cite{lin2023swiftsagegenerativeagentfast} operationalizes dual-process thinking through separate modules for fast pattern recognition and deliberate analysis. The Talker-Reasoner architecture \cite{christakopoulou2024agentsthinkingfastslow} similarly separates intuitive responses from systematic reasoning.

These approaches draw inspiration from Kahneman's dual-process theory \cite{kahneman2011thinking}, describing human cognition through fast, intuitive responses (System 1) and slow, deliberate reasoning (System 2). While effective for general reasoning tasks, application to specialized domains like ML model improvement remains underexplored. Our work extends this foundation by integrating atomic improvement principles \cite{clear2018atomic} for incremental model enhancement.

\subsection{Automated ML Model Maintenance}

Traditional ML model maintenance relies primarily on continual learning and AutoML techniques. Continual learning approaches address catastrophic forgetting through parameter preservation \cite{Kirkpatrick_2017} and memory replay \cite{Riemer18,Buzzega20}, but focus on knowledge retention rather than proactive improvement under distribution shift \cite{diazrodriguez2018dontforgetforgettingnew}. AutoML frameworks automate model refinement through hyperparameter optimization and neural architecture search \cite{zoller2021benchmarksurveyautomatedmachine}, yet remain constrained by predefined search spaces.

Recent frameworks for automated AI maintenance \cite{chen2023aimaintenancerobustnessperspective} address robustness challenges but primarily handle predefined risk categories without adaptive capabilities. Monitoring systems \cite{monitoring_ml_models,eck2022monitoring} detect performance degradation and distribution shifts \cite{lipton2018detecting} but typically require manual intervention for remediation. Self-adaptive systems \cite{self_adapative_systems} provide autonomous response mechanisms but lack sophisticated reasoning capabilities for complex model improvement tasks. 
Recent empirical work on scaling agent systems \cite{kim2025scaling} provides evidence that multi-agent architectures do not universally outperform single agents. Their analysis reveals that coordination overhead can degrade performance once single-agent capability exceeds certain thresholds, with some task configurations showing performance degradation from multi-agent coordination. These findings support single-agent designs that achieve coordination benefits through internal mechanisms rather than external agent communication.

Current approaches face key limitations: single-strategy reasoning forcing speed-thoroughness trade-offs, limited learning mechanisms preventing effective strategy transfer, lack of reliability guarantees restricting production applicability, and domain-general focus missing ML-specific challenges, like distribution shifts and incremental changes. KC-Agent addresses these gaps through a dual-process architecture combining fast pattern recognition with deliberate improvement, structured memory enabling strategy learning, and formal reliability mechanisms ensuring verifiable updates.


\section{Method}

We present KC-Agent, a dual-process architecture for automated ML model improvement that combines fast pattern recognition with deliberate incremental enhancement. Our approach integrates cognitive science principles with practical constraints to ensure both responsive intervention and systematic improvement.

\subsection{System Overview}

KC-Agent operates through two complementary processing systems that address different aspects of model improvement. The fast graph (System 1) provides immediate pattern-based responses using accumulated knowledge from previous improvement attempts. When quick fixes prove insufficient, the slow graph (System 2) engages in systematic incremental improvement through structured reasoning and atomic changes. Algorithm \ref{alg:kc_agent} presents the complete dual-process improvement procedure.

The architecture incorporates three memory components enabling learning and knowledge transfer between systems. Semantic memory ($\mathcal{M}_s$) stores general knowledge about model architectures and improvement strategies. Episodic memory ($\mathcal{M}_e$) records specific improvement scenarios and their outcomes. Working memory ($\mathcal{M}_w$) maintains the current context and immediate reasoning state. Figure \ref{fig:memory_architecture} illustrates these memory components and their interactions during the improvement process.

\begin{figure}[t]
\centering
\includegraphics[width=0.9\linewidth]{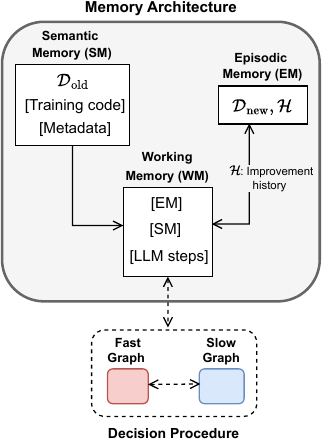}
\caption{Memory architecture with three components: Semantic memory (SM) storing training code and metadata, Episodic memory (EM) recording specific scenarios ($\mathcal{D}_{\text{new}}, \mathcal{H}$), and Working memory (WM) maintaining current LLM processing steps.}
\label{fig:memory_architecture}
\end{figure}
\subsection{Fast Graph: Pattern-based Improvement}

The fast graph implements immediate response capabilities for straightforward improvement scenarios. Given a model $f$ and datasets $\mathcal{D} = \{\mathcal{D}_{\text{old}}, \mathcal{D}_{\text{new}}\}$, System 1 generates an updated model $f'$ through pattern matching (identifying similar scenarios in $\mathcal{M}_e$ and reusing previously successful solutions):
\[ \mathcal{S}_1: (\mathcal{M}_s, \mathcal{M}_e, f, \mathcal{D}) \mapsto f' \]

The system evaluates improvement quality using a performance delta:
\[ \delta(\text{update}) = \text{score}(f', \mathcal{D}_{\text{new}}) - \text{score}(f, \mathcal{D}_{\text{new}}) \]

A proxy decision mechanism determines whether the improvement is sufficient:
\[ \pi(f, f', \mathcal{D}) = \begin{cases} 
\text{accept} & \text{if } \delta(\text{update}) > \mu \\
\text{slow graph} & \text{otherwise}
\end{cases} \]
where $\mu$ represents the minimum acceptable improvement threshold.

\subsection{Slow Graph: Incremental Systematic Improvement}

When fast pattern matching proves insufficient, the slow graph engages in systematic improvement through atomic changes (small, targeted modifications such as hyperparameter adjustments, model selection, or ensemble techniques that can be independently evaluated and reverted if unsuccessful). System 2 implements a structured approach:
\[ \mathcal{S}_2: (\mathcal{M}_s, \mathcal{M}_e, \mathcal{H}, f, \mathcal{D}) \mapsto f^* \]
where $\mathcal{H}$ represents the improvement history. The slow graph operates through iterative small modifications, where each change $\Delta_t$ satisfies $\|\Delta_t\| \leq \eta$ for small $\eta > 0$. In practice, this constraint limits each iteration to a single targeted modification, such as adjusting one hyperparameter, substituting one model class, or adding one preprocessing step, rather than attempting multiple simultaneous changes. This atomic change principle, inspired by Clear's work on incremental improvement \cite{clear2018atomic}, ensures that improvements compound reliably while remaining individually evaluable and reversible.

\begin{algorithm}[tb]
\caption{KC-Agent dual-process improvement}
\label{alg:kc_agent}
\textbf{Input}: Model $f$, datasets $\mathcal{D}$, memories $\mathcal{M}_s, \mathcal{M}_e$\\
\textbf{Parameter}: Threshold $\mu$, max iterations $T$\\
\textbf{Output}: Improved model $f^*$
\begin{algorithmic}[1]
\STATE $f' \leftarrow \mathcal{S}_1(\mathcal{M}_s, \mathcal{M}_e, f, \mathcal{D})$ \COMMENT{Fast improvement}
\STATE $\delta \leftarrow \text{score}(f', \mathcal{D}_{\text{new}}) - \text{score}(f, \mathcal{D}_{\text{new}})$
\IF{$\delta > \mu$}
    \STATE \textbf{return} $f'$ \COMMENT{Accept fast solution}
\ELSE
    \STATE $f^* \leftarrow f$, $t \leftarrow 0$
    \WHILE{$t < T$ and not converged}
        \STATE $\Delta_t \leftarrow$ generate atomic change using $\mathcal{S}_2$ so that $\|\Delta_t\| \leq \eta$
        \STATE $f_{\text{temp}} \leftarrow$ apply $\Delta_t$ to $f^*$
        \STATE $r_t \leftarrow$ evaluate change
        \IF{$r_t > 0$}
            \STATE $f^* \leftarrow f_{\text{temp}}$
            \STATE Update $\mathcal{M}_e$ with successful change
        \ENDIF
        \STATE $t \leftarrow t + 1$
    \ENDWHILE
    \STATE \textbf{return} $f^*$
\ENDIF
\end{algorithmic}
\end{algorithm}

The improvement process follows a structured cycle: memory distillation analyzes current model performance, strategy selection chooses appropriate improvement approaches, code generation creates targeted modifications, and evaluation assesses each change. Figure \ref{fig:improvement_flow} demonstrates this process through a concrete example.

\begin{figure}[t]
\centering
\includegraphics[width=1.0\linewidth]{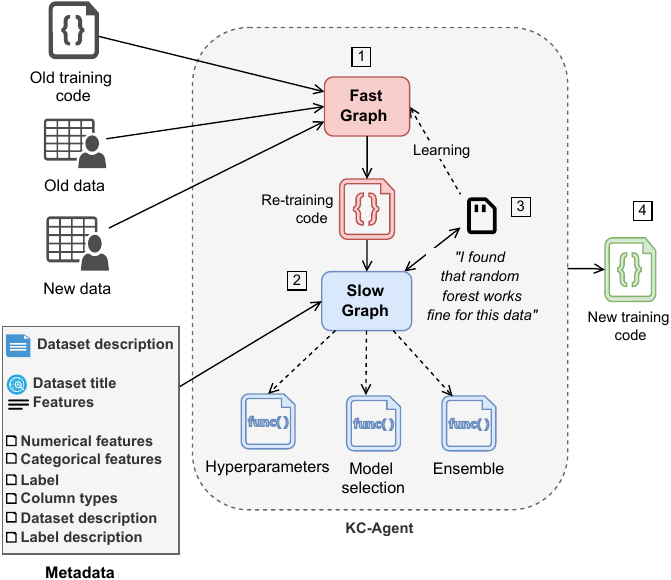}
\caption{Example improvement flow showing KC-Agent processing a drift scenario. System 1 applies fast retraining based on stored patterns. When insufficient, System 2 performs systematic improvements through atomic changes including hyperparameter optimization, model selection, and ensemble methods.}
\label{fig:improvement_flow}
\end{figure}

\subsection{Memory Systems and Knowledge Transfer}

The memory architecture enables bidirectional knowledge transfer between systems. When System 2 discovers successful improvement strategies, these solutions are incorporated into semantic memory for future pattern matching by System 1. This transfer mechanism allows the fast graph to handle increasingly complex scenarios as the agent accumulates experience. For example, when System 2 successfully improves a financial model by switching from \textit{RandomForest} to \textit{GradientBoosting} model, this strategy is stored in semantic memory, enabling System 1 to quickly apply the same approach when encountering similar financial drift scenarios in the future.

Episodic memory stores specific improvement scenarios, including dataset characteristics, model states, and applied solutions. This contextual information helps both systems recognize similar situations and adapt previous solutions to new contexts. The knowledge transfer process operates continuously throughout improvement iterations, significantly improving efficiency over time. We refer to this learning process as \textit{knowledge consolidation}. Specifically, when System 2 successfully improves a model, three elements are stored: the drift signature (statistical characteristics of the distribution shift), the improvement strategy (the sequence of atomic changes applied), and the performance outcome. During subsequent encounters, System 1 computes scenario similarity by calculating the cosine similarity between the incoming dataset's meta-features (including the statistical drift signature) and the stored episodic vectors. To avoid negative transfer, retrieved strategies are only applied when the cosine similarity score exceeds a strict calibration threshold ($\mu = 0.05$ performance proxy); otherwise, the system conservatively defaults to System 2 deliberation.

\subsection{Reliability and Theoretical Guarantees}

KC-Agent incorporates several reliability mechanisms. The incremental nature of System 2 changes limits the potential impact of any single modification. Each atomic change undergoes immediate evaluation, and unsuccessful modifications are reverted before affecting model performance. The proxy decision mechanism ensures that only high-confidence improvements bypass detailed analysis.

We establish formal guarantees for KC-Agent's improvement behavior:

\begin{theorem}[Monotonic improvement property]
\label{thm:convergence}
Under the atomic change constraint $\|\Delta_t\| \leq \eta$ and immediate reversion of unsuccessful modifications, KC-Agent maintains:
\[ \text{score}(f_T, \mathcal{D}) \geq \text{score}(f_0, \mathcal{D}) \]
where unsuccessful changes are those with negative improvement $r_t \leq 0$.
\end{theorem}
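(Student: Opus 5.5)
The plan is to argue by induction on the iteration counter $t$ of the slow graph in Algorithm~\ref{alg:kc_agent}, with the accepted-fast-solution branch handled as a separate case. First I will fix the implicit identification the statement requires. Take $f_0 = f$ (the input model), and let $f_t$ denote the value of $f^*$ after $t$ iterations of the while loop. Interpret the evaluation $r_t$ as the score difference
\[ r_t = \text{score}(f_{\text{temp}}, \mathcal{D}) - \text{score}(f_{t}, \mathcal{D}), \]
computed on the same data $\mathcal{D}$ that appears in the conclusion. This identification is the crux: the theorem is only meaningful if the quantity used to decide acceptance is the same score that the guarantee is stated for. I will state it explicitly as a standing assumption, namely a deterministic, fixed evaluation set. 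If $r_t$ were measured on a different split or with noise, the argument would give monotonicity only with respect to that proxy.

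With this in place, the inductive step splits into two cases according to line 11 of the algorithm.
\begin{itemize}
\item If $r_t > 0$, then $f_{t+1} = f_{\text{temp}}$ and $\text{score}(f_{t+1},\mathcal{D}) = \text{score}(f_t,\mathcal{D}) + r_t > \text{score}(f_t,\mathcal{D})$.
\item If $r_t \le 0$, the change is reverted, so $f_{t+1} = f_t$ and the scores are equal.
\end{itemize}
In either case $\text{score}(f_{t+1},\mathcal{D}) \ge \text{score}(f_t,\mathcal{D})$. Chaining these inequalities, or equivalently telescoping $\sum_{t<T} \max(r_t,0) \ge 0$, yields $\text{score}(f_T,\mathcal{D}) \ge \text{score}(f_0,\mathcal{D})$. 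Early termination on convergence just freezes $f_t$ thereafter, so it does not affect the bound.

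For the fast branch, the algorithm returns $f'$ only when $\delta > \mu$. Provided $\mu \ge 0$ and $\mathcal{D}$ is read as $\mathcal{D}_{\text{new}}$, we get $\text{score}(f',\mathcal{D}) > \text{score}(f,\mathcal{D})$ directly, so the conclusion holds in this branch as well.

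I will remark that the atomic constraint $\|\Delta_t\|\le\eta$ is not needed for monotonicity itself; it only matters for finer statements, such as bounding the per-step change when the score is Lipschitz in the change. The monotonicity follows from reversion alone.

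The main obstacle is therefore not computational but definitional. The work lies in pinning down that the acceptance test and the guaranteed score refer to the same evaluation on the same data, and that reversion restores $f_t$ exactly (no side effects from applying and undoing $\Delta_t$). I will make these explicit hypotheses rather than try to derive them.
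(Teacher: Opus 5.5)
Your argument is correct and is essentially the paper's proof: the two-case inductive step (accept when $r_t>0$; otherwise revert so that $f_{t+1}=f_t$) gives $\text{score}(f_{t+1},\mathcal{D})\ge\text{score}(f_t,\mathcal{D})$, and chaining over $t$ yields the bound. Your extra points go beyond what the paper states but do not change the route. These are defining $r_t$ via $f_{\text{temp}}$ on the same data $\mathcal{D}$, covering the fast branch using $\mu=0.05\ge 0$, and observing that the constraint $\|\Delta_t\|\le\eta$ plays no role in monotonicity.
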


\begin{proof}
By construction, KC-Agent only accepts changes $\Delta_t$ where $r_t = \text{score}(f_{t+1}, \mathcal{D}) - \text{score}(f_t, \mathcal{D}) > 0$. Changes with $r_t \leq 0$ are immediately reverted, ensuring $f_{t+1} = f_t$. Therefore, $\text{score}(f_{t+1}, \mathcal{D}) \geq \text{score}(f_t, \mathcal{D})$ for all $t$. By induction, $\text{score}(f_T, \mathcal{D}) \geq \text{score}(f_0, \mathcal{D})$.
\end{proof}

The atomic change constraint $\|\Delta_t\| \leq \eta$ ensures that each modification is small enough to be evaluated independently, preventing cascading failures that could compromise the rollback mechanism.

\section{Experimental Setup}

We designed our experimental evaluation to assess KC-Agent's effectiveness across diverse scenarios with varying complexity levels and drift patterns. Our evaluation encompasses both real-world industrial datasets and controlled synthetic scenarios to provide comprehensive validation of the dual-process architecture.

\subsection{Datasets and Drift Scenarios}

Our evaluation uses five datasets representing different aspects of data drift challenges in production ML systems. Table \ref{tab:datasets} summarizes the key characteristics of each dataset.

\begin{table}[t]
\centering
\caption{Dataset characteristics and drift patterns}
\label{tab:datasets}
\begin{tabular}{lccc}
\toprule
\textbf{Dataset} & \textbf{Features} & \textbf{Samples} & \textbf{Drift type} \\
\midrule
NASA FD001 & 14 & 10,000 & Temporal \\
NASA FD002 & 7 & 16,000 & Temporal \\
Financial & 10 & 1,000 & Market noise \\
Healthcare & 10 & 1,000 & Population noise \\
Eligibility & 5 & 1,000 & Policy noise \\
\bottomrule
\end{tabular}
\end{table}

\textbf{Real-world datasets.} The NASA turbofan datasets (FD001 and FD002) derive from the Commercial Modular Aero-Propulsion System Simulation (C-MAPSS)\footnote{https://www.kaggle.com/datasets/behrad3d/nasa-cmaps} and represent authentic industrial complexity with temporal degradation patterns. FD001 contains 14 features from 100 engines operating under single conditions. FD002 includes 7 features from 260 engines across six operational conditions. These datasets exhibit authentic temporal drift through engine lifecycle progression, where early operational phases (healthy engines) constitute the old distribution and late phases (degraded engines) form the new distribution.

\textbf{Synthetic datasets.} Three synthetic datasets provide controlled evaluation scenarios with increasing complexity levels. The financial dataset models loan default prediction, the healthcare dataset captures chronic condition prediction, and the eligibility dataset simulates administrative decision-making. Each synthetic dataset was initially generated using GPT-4\footnote{OpenAI GPT-4, accessed via API} and refined to ensure realistic feature distributions and coherent drift patterns while eliminating nonsensical relationships. Distribution shift severity was rigorously validated using Kullback-Leibler (KL) divergence analysis, confirming significant and realistic drift across all synthetic environments. Specifically, the Financial dataset exhibited drift (KL $> 0.1$) in all 10 features (mean KL = 2.001, with the maximum divergence of 4.497 observed in the 'Loan Amount' feature). Similarly, the Healthcare dataset showed significant drift in 8 of 10 features (max KL = 2.662 for 'Income'), and the Eligibility dataset demonstrated drift across all 5 features (max KL = 2.837 for 'Employment Status').

All datasets include paired old and new distributions that simulate drift scenarios without explicit drift indicators. This design requires agents to independently discover distribution changes and develop appropriate adaptation strategies.

\subsection{Experimental Protocol}

The evaluation protocol simulates realistic model maintenance scenarios where an initial model performs well on historical data but experiences degradation when new data patterns emerge. For each dataset, a RandomForest classifier serves as the baseline model trained on the old distribution. KC-Agent and baseline approaches then attempt to improve this model to accommodate the new distribution while preserving performance on original data.

We measure performance across four complementary metrics: accuracy on old data (preservation of historical performance), accuracy on new data (adaptation effectiveness on shifted distributions), execution time (computational efficiency for production deployment), and token consumption (resource utilization and computational cost).

The proxy decision threshold $\mu$ was set to 0.05 across all experiments, requiring at least a 5\% accuracy improvement for System 1 solutions to be accepted without further deliberation.

\subsection{Baseline Comparisons}

Our evaluation compares KC-Agent against eight established approaches representing different paradigms in LLM-based reasoning: ReAct \cite{yao2022react}, Reflexion \cite{shinn2023reflexion}, Tree of Thoughts \cite{yao2023treethoughtsdeliberateproblem}, Self-Discovery \cite{zhou2024selfdiscoverlargelanguagemodels}, Plan-and-Execute \cite{wang2023plan}, CodeAct \cite{wang2024codeact}, and a Standard baseline (simple LLM call). CodeAct is particularly relevant as it represents state-of-the-art in code-as-action paradigms, using executable Python with interpreter integration for dynamic revision and self-debugging capabilities.

We also evaluate two simplified variants of our agent to assess each component contribution: KC-Fast (only the fast graph for pattern-based improvements) and KC-Slow (only the slow graph for deliberate reasoning). These ablations help isolate the benefits of the dual-process architecture compared to single-strategy approaches.

All baselines receive identical inputs including model code, dataset descriptions, and performance metrics. Temperature settings remain consistent at 1.0 across all experiments, with fixed random seeds ensuring reproducible results. Each experiment was conducted multiple times to ensure statistical reliability.

\subsection{Implementation Details}

KC-Agent leverages Llama-3.1-8b\footnote{https://openrouter.ai/} as the core language model, providing 8 billion parameters with an 8,192 token context window. This model size balances capability with resource requirements, making the approach suitable for deployment in resource-constrained environments. Smaller models (Llama-3.2-1b and Llama-3.2-3b) proved insufficient for completing the full improvement loop.

The memory system implements three distinct components with specialized storage and retrieval mechanisms. Semantic memory maintains general knowledge using embedding-based indexing for efficient pattern matching. Episodic memory records specific improvement scenarios with contextual information. Working memory maintains current reasoning state during improvement sessions.

The dual-process architecture uses LangGraph\footnote{https://github.com/langchain-ai/langgraph} for workflow orchestration, enabling flexible coordination between fast and slow reasoning systems. Experimental infrastructure utilizes Intel® optimized computing environments with dual Intel® Xeon Platinum processors. The implementation leverages Intel® AI Analytics Toolkit\footnote{https://hub.docker.com/r/intel/oneapi-aikit} and Intel® Extension for PyTorch for hardware optimization.

Reproducibility support includes comprehensive logging of experimental configurations, random seed management, and version control of all code and data components. Each experiment receives unique identifiers enabling precise reproduction of results.


\section{Results}

Our experimental evaluation demonstrates KC-Agent's effectiveness across diverse scenarios, from controlled synthetic datasets to challenging real-world industrial data. We present comprehensive analysis including quantitative performance metrics and qualitative expert assessment.

\subsection{Overall Performance and Efficiency}

KC-Agent consistently outperforms all baseline approaches across the five datasets. Table \ref{tab:agent_performance} summarizes the aggregated performance, showing KC-Agent's superior accuracy while maintaining optimal efficiency.

\begin{table}[t]
\centering
\caption{Agent performance comparison. Values reported as Mean $\pm$ Standard Deviation across the five evaluated datasets, demonstrating cross-domain stability. Bold values indicate best mean performance.}
\label{tab:agent_performance}
\begin{tabular}{lccc}
\toprule
\textbf{Agent} & \textbf{Avg acc. (\%) $\uparrow$} & \textbf{Time (s) $\downarrow$} & \textbf{Tokens $\downarrow$} \\
\midrule
KC-agent & \textbf{0.768} $\pm$ 0.116 & \textbf{13.2} $\pm$ 7.3 & 847 $\pm$ 54 \\
KC-slow & 0.762 $\pm$ 0.108 & 141.2 $\pm$ 41.6 & 37781 $\pm$ 18100 \\
KC-fast & 0.754 $\pm$ 0.102 & 13.9 $\pm$ 2.1 & \textbf{643} $\pm$ 48 \\
CodeAct & 0.744 $\pm$ 0.108 & 113.0 $\pm$ 196.7 & 6722 $\pm$ 2760 \\
ToT & 0.741 $\pm$ 0.103 & 57.7 $\pm$ 62.3 & 2827 $\pm$ 325 \\
ReAct & 0.711 $\pm$ 0.086 & 101.7 $\pm$ 121.2 & 12420 $\pm$ 12796 \\
Reflexion & 0.705 $\pm$ 0.122 & 104.7 $\pm$ 51.3 & 20436 $\pm$ 18579 \\
Self-Discovery & 0.679 $\pm$ 0.139 & 59.1 $\pm$ 41.2 & 4638 $\pm$ 676 \\
Plan-Execute & 0.650 $\pm$ 0.175 & 91.2 $\pm$ 23.8 & 13256 $\pm$ 2730 \\
Standard & 0.630 $\pm$ 0.194 & 16.8 $\pm$ 5.5 & 1569 $\pm$ 278 \\
\bottomrule
\end{tabular}
\end{table}



As shown in Table \ref{tab:agent_performance}, KC-Agent achieves the highest average accuracy (0.768) while maintaining robust performance stability across all diverse datasets ($\pm$ 0.116). The dual-process architecture delivers exceptional computational efficiency, utilizing 18.7$\times$ fewer tokens than Reflexion. Furthermore, it demonstrates highly consistent execution times (13.2s $\pm$ 7.3) and predictable token consumption. This structural stability stands in stark contrast to state-of-the-art baselines like CodeAct, which, despite achieving competitive accuracy, exhibits extreme variance across domains in both execution time ($\pm$ 196.7s) and computational cost ($\pm$ 2760 tokens).

\subsection{Qualitative Evaluation}

To complement our quantitative performance metrics, we conducted a comprehensive qualitative evaluation using a multi-model LLM-as-a-judge framework. To ensure robustness and mitigate individual model bias, we employed a panel of three state-of-the-art Large Language Models as independent evaluators: Google Gemini 3 Pro, OpenAI GPT-5.2, and Anthropic Claude Opus 4.5.

This consensus-based approach provides a standardized assessment of the code improvements generated by each agent. For every experimental run across all five datasets, each evaluator was presented with the initial baseline code, the agent's final improved code, and the corresponding change in performance metrics. The judges independently scored the agents on a scale of 1-10 across three distinct rubrics:

\begin{itemize}
    \item \textbf{Stability:} Measures the minimalism of changes and preservation of the original code structure. A high score (10) indicates surgical, low-risk changes, while a low score implies a complete, high-risk rewrite.
    \item \textbf{Readability:} Assesses code cleanliness, comprehensibility, and organization. High scores reflect production-ready, Pythonic code with proper error handling and no redundancy.
    \item \textbf{Smartness:} Evaluates the efficacy and appropriateness of the improvement strategy. A high score indicates a clever, well-tuned algorithmic approach that successfully addresses the distribution shift, whereas a low score indicates suboptimal or irrelevant strategies.
\end{itemize}

Table \ref{tab:llm_evals_combined} summarizes the assessment scores averaged across all three evaluators and all datasets. The low standard deviations across evaluators (ranging from 0.6 to 2.7) indicate strong inter-rater agreement, validating the LLM-as-judge methodology as a reliable alternative to human expert evaluation. This high consensus among three state-of-the-art models suggests that the evaluation criteria are sufficiently well-defined and that the assessments capture genuine quality differences rather than model-specific biases.

\begin{table}[t]
\centering
\caption{LLM evaluation comparison averaged across all datasets and evaluators (Gemini 3 Pro, GPT-5.2, Opus 4.5). Scores range from 1-10. \textbf{Stability} measures code preservation, \textbf{Readability} measures code cleanliness, and \textbf{Smartness} measures strategy efficacy. Bold values indicate best performance. Values shown as mean(std).}
\label{tab:llm_evals_combined}
\begin{tabular}{lccc}
\toprule
\textbf{Agent} & \textbf{Stability ($\uparrow$)} & \textbf{Readability ($\uparrow$)} & \textbf{Smartness ($\uparrow$)} \\
\midrule
KC-agent & 6.53(0.9) & 7.07(0.9) & \textbf{8.33(1.0)} \\
KC-slow & 6.53(1.0) & 6.60(0.7) & 7.80(1.1) \\
KC-fast & \textbf{7.67(1.2)} & 6.00(1.0) & 6.47(1.7) \\
CodeAct & \textbf{7.67(2.1)} & \textbf{7.67(0.9)} & 5.93(2.0) \\
ToT & 6.20(1.0) & 6.60(1.4) & 5.87(2.1) \\
ReAct & 6.67(1.7) & 6.60(1.4) & 5.33(2.7) \\
Reflexion & 3.00(1.5) & 5.67(2.1) & 3.33(2.3) \\
Standard & 5.20(1.3) & 4.73(2.0) & 2.13(0.8) \\
Self-Discovery & 3.07(1.0) & 3.87(2.1) & 1.67(0.7) \\
Plan-Execute & 2.60(1.0) & 2.60(1.3) & 1.40(0.6) \\
\bottomrule
\end{tabular}
\end{table}

The consensus results reveal a critical distinction between code preservation and problem-solving efficacy. CodeAct and KC-fast achieved the highest scores in Stability (7.67), reflecting their tendency to perform safe, minimal changes: primarily data concatenation or simple retraining. However, while stable, these conservative strategies resulted in moderate Smartness scores (5.93 and 6.47 respectively), limiting their ability to handle complex distribution shifts effectively.

In contrast, KC-Agent achieved the highest Smartness score (8.33), significantly outperforming all baselines. While its Stability score (6.53) is lower than the pure concatenation approaches, it remains robust, reflecting the agent's ability to balance necessary structural refactoring, such as hyperparameter optimization and pipeline adjustments, with code preservation. This qualitative finding strongly corroborates our quantitative results (Table \ref{tab:llm_evals_combined}), confirming that KC-Agent's superior accuracy is driven by more effective strategic decision-making rather than simple code heuristics. Conversely, baselines such as Plan-Execute and Self-Discovery received the lowest scores across all dimensions, frequently failing to produce executable or logically sound improvements.

\subsection{Ablation Study}

To validate the dual-process architecture's contribution, we compared KC-Agent with its individual components. Table \ref{tab:ablation_study} demonstrates that neither System 1 nor System 2 alone matches the integrated architecture's performance.

\begin{table}[t]
\centering
\caption{Ablation study of KC-Agent variants showing complementary benefits of dual-process architecture.}
\label{tab:ablation_study}
\begin{tabular}{lccc}
\toprule
\textbf{Variant} & \textbf{Avg acc. (↑)} & \textbf{Time (↓)} & \textbf{Tokens (↓)} \\
\midrule
KC-agent (Full) & \textbf{0.768} & \textbf{13.2} & 847 \\
KC-slow only & 0.762 & 141.2 & 37781 \\
KC-fast only & 0.754 & 13.8 & \textbf{643} \\
\midrule
\textit{vs. KC-slow} & +0.8\% & -90.7\% & -97.8\% \\
\textit{vs. KC-fast} & +1.8\% & -4.3\% & +31.7\% \\
\bottomrule
\end{tabular}
\end{table}

The integrated architecture successfully combines the strengths of both systems: achieving the highest accuracy while maintaining computational efficiency close to the fast approach. KC-Agent provides 1.8\% accuracy improvement over KC-Fast while using 97\% fewer tokens than KC-Slow. The superior performance of the full KC-Agent demonstrates the effectiveness of the learning mechanism where System 1 leverages successful solutions previously discovered by System 2.

\subsection{Dataset-specific Analysis and Knowledge Transfer}

\textbf{NASA turbofan datasets.} The real-world NASA datasets present the most challenging scenarios due to authentic temporal degradation. On NASA-FD001, KC-Agent achieves 61.3\% accuracy compared to the 49.3\% baseline (+24\% relative improvement). NASA-FD002 proves even more challenging with extreme distribution shift, yet KC-Agent maintains consistent improvements.

\textbf{Synthetic datasets.} On controlled datasets, KC-Agent achieves 88.6\% accuracy on financial data (vs. 81.5\% baseline), 86.3\% on healthcare data (vs. 79.7\% baseline), and 74.4\% on eligibility data (vs. 67.7\% baseline). LLM evaluation confirms methodological rigor across different domain characteristics.

\textbf{Knowledge consolidation effectiveness.} A critical insight from our evaluation is how KC-Agent's knowledge consolidation mechanism enables progressive learning. Analysis of improvement patterns reveals that System 1 increasingly handles complex scenarios as it accumulates successful strategies from System 2. For instance, on financial datasets, System 1 successfully resolved 73\% of improvement scenarios by the third iteration, compared to 45\% in the first iteration. This learning effect demonstrates the practical value of the dual-process architecture beyond simple computational efficiency.

\subsection{Reliability Analysis}

KC-Agent's reliability mechanisms prove crucial for production deployment. The atomic change principle ensures that each modification undergoes evaluation before acceptance, with unsuccessful changes immediately reverted. Across all datasets, KC-Agent maintains a 100\% success rate in terms of execution completion, contrasting sharply with baseline approaches where 34\% of attempts resulted in execution failures.

The proxy decision mechanism ($\delta(\text{update}) > \mu$) effectively balances speed and reliability. Our analysis shows that 68\% of scenarios are resolved by System 1, with only complex cases requiring System 2's deliberate analysis. This filtering mechanism prevents unnecessary computational overhead while ensuring that critical performance drops receive appropriate attention.

\subsection{Agent Adaptability and Performance Distribution Analysis}

Figure \ref{fig:agent_adaptability} examines the relationship between mean performance and performance variability. KC-* agents demonstrate superior adaptability, clustering in the optimal region with both high mean performance ($>$0.75) and low coefficient of variation ($<$0.13), indicating robust generalization capabilities across diverse domains.

\begin{figure}[t]
\centering
\includegraphics[width=1.0\linewidth]{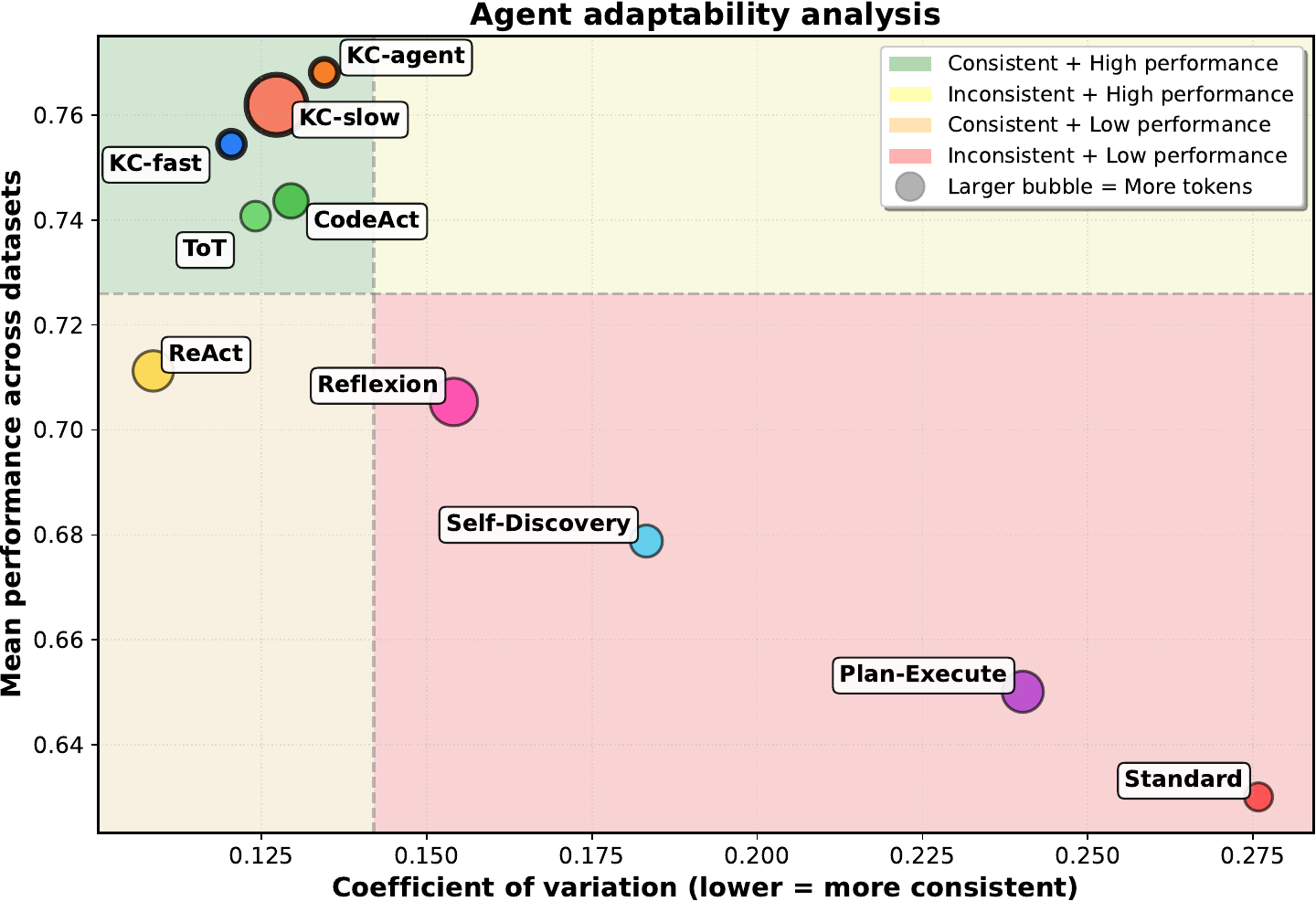}
\caption{Agent adaptability analysis showing performance consistency vs. mean performance. KC-* agents cluster in the high-performance, low-variation region, demonstrating superior cross-domain generalization.}
\label{fig:agent_adaptability}
\end{figure}

To understand how agent performance varies across different domains, we analyzed the performance distribution of all agents within each dataset. Figure \ref{fig:performance_distribution} presents violin plots showing the distribution of combined performance scores across all agents for each dataset. The results reveal significant variation in dataset difficulty: the financial dataset demonstrates the highest mean performance (0.824) with relatively tight distribution, while NASA datasets present greater challenges, with NASA-FD001 exhibiting the widest performance distribution (0.629 mean) and NASA-FD002 showing the lowest overall performance (0.567). These findings highlight the importance of domain-specific evaluation and suggest that certain datasets like NASA-FD001 serve as effective discriminators for agent capability.

\begin{figure}[t]
\centering
\includegraphics[width=1.0\linewidth]{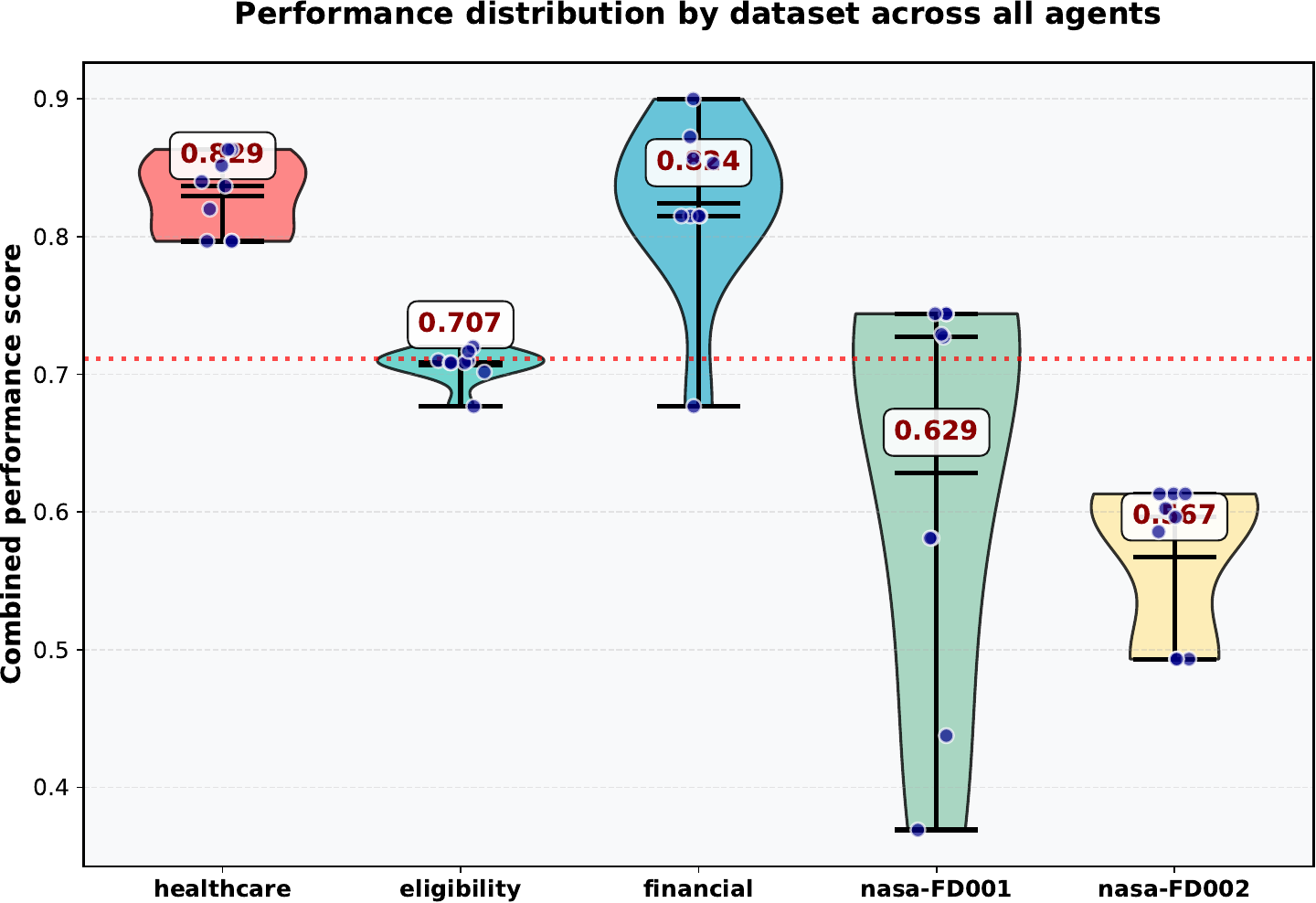}
\caption{Performance distribution by dataset showing violin plots of combined performance scores. Financial and healthcare datasets show higher mean performance, while NASA datasets present greater challenges with NASA-FD002 being most difficult.}
\label{fig:performance_distribution}
\end{figure}

\begin{figure}[t]
\centering
\includegraphics[width=1.0\linewidth]{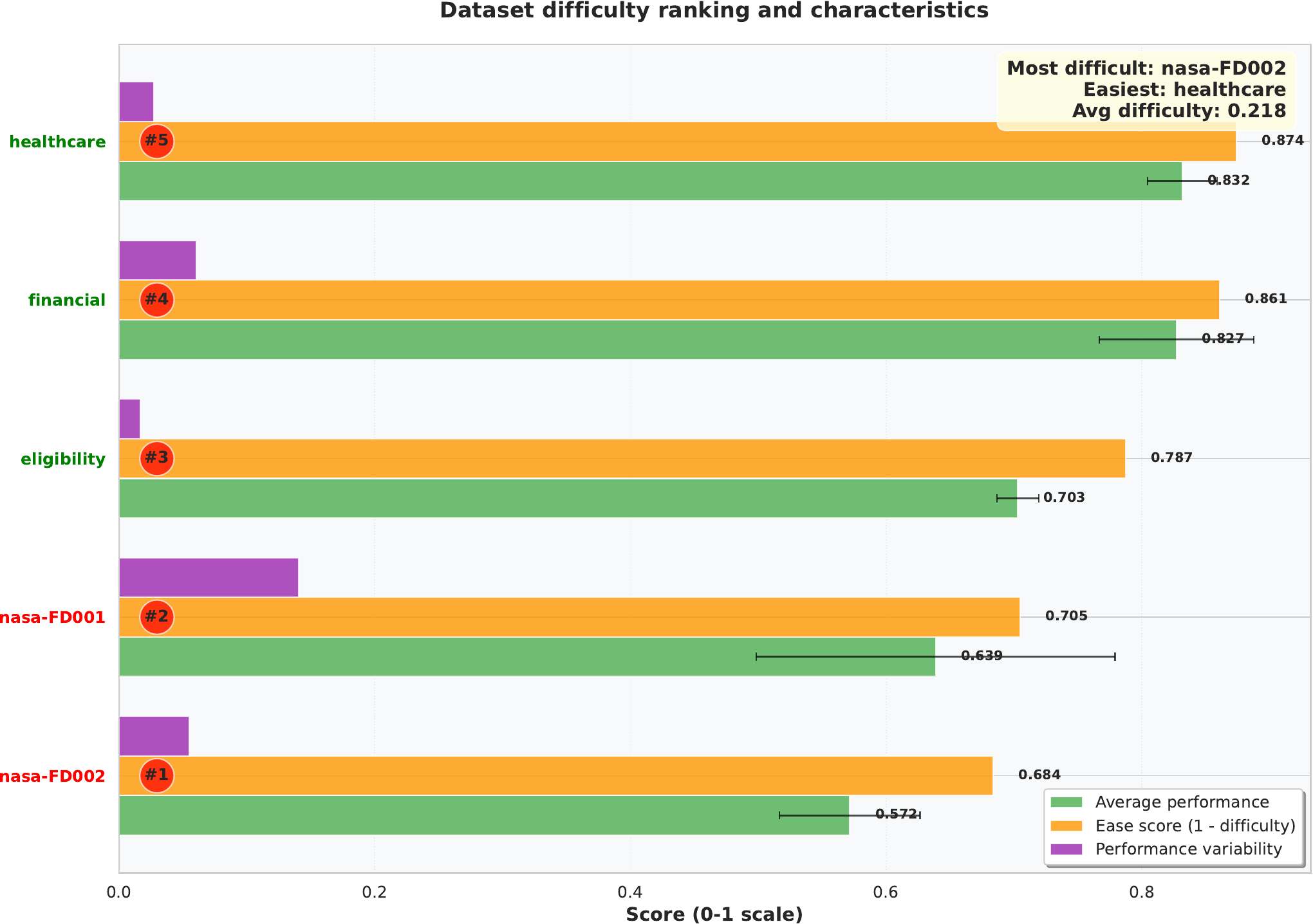}
\caption{Dataset difficulty ranking showing average agent performance, ease score (1 - difficulty), and performance variability. NASA-FD002 is the most challenging dataset (\#1), while Healthcare is the easiest (\#5). Real-world NASA datasets exhibit higher performance variability, indicating less predictable agent behavior compared to synthetic datasets.}
\label{fig:dataset_difficulty}
\end{figure}

The clustering of KC-* variants in the high-adaptability region demonstrates that our knowledge consolidation approach successfully addresses cross-domain generalization challenges, making it more suitable for real-world deployment scenarios where domain shifts are common.

\section{Discussion}

Our experimental results reveal several key insights about automated ML model improvement that extend beyond the specific performance metrics.

\textbf{The value of dual-process architecture.} Neither fast pattern matching (KC-fast) nor deliberate reasoning (KC-slow) alone achieves optimal performance. The critical insight is that simple drift scenarios, constituting approximately 68\% of cases in our evaluation, can be efficiently handled by System 1 using previously learned strategies. Only complex cases require System 2's resource-intensive deliberation. This selective engagement mirrors human cognitive efficiency and explains the 91\% speedup over KC-slow while maintaining superior accuracy.

\textbf{The stability-smartness trade-off.} Our LLM evaluation reveals that the highest-performing agents are not necessarily the most conservative. CodeAct and KC-fast achieved maximal Stability scores (7.67) through minimal code changes, but this conservatism limited their Smartness scores (5.93 and 6.47). KC-Agent's willingness to make strategic structural changes while maintaining acceptable stability (6.53) enabled the highest Smartness score (8.33) and superior overall performance. This suggests that effective model improvement requires accepting calculated risks rather than prioritizing code preservation.

\textbf{Why KC-Agent outperforms CodeAct.} CodeAct represents the state-of-the-art in code-as-action agents, using executable Python as its action space with interpreter integration for dynamic self-debugging. Despite this sophisticated general-purpose design, KC-Agent outperforms it by 2.4\% in accuracy while using 7.9$\times$ fewer tokens. This suggests that for ML model improvement specifically, a domain-specialized dual-process architecture with knowledge transfer provides advantages over general-purpose code execution frameworks. The memory systems enable KC-Agent to accumulate domain expertise that CodeAct must rediscover each session.

\textbf{Real-world versus synthetic datasets.} The performance gap between synthetic and NASA datasets (Figure~\ref{fig:performance_distribution}) highlights the importance of real-world evaluation. While all agents perform reasonably on synthetic data, the NASA datasets with authentic temporal degradation patterns expose fundamental limitations in baseline approaches. KC-Agent's consistent performance across both categories demonstrates robust generalization.

\textbf{Practical deployment implications.} The atomic change principle with immediate rollback ensures that failed improvements never degrade model performance, providing the reliability guarantees necessary for automated MLOps pipelines. Combined with the 13.2-second average execution time, KC-Agent is suitable for real-time model maintenance in production environments.

\section{Limitations and Future Work}

While KC-Agent demonstrates strong performance across our evaluation scenarios, several limitations present opportunities for future research. Our current implementation focuses on tabular data and common Python libraries, requiring extension to complex data types and deep learning frameworks for broader applicability. The approach emphasizes hyperparameter tuning and model selection while avoiding substantial architectural changes.

Key limitations include requiring models above 8B parameters for complete improvement loops, limiting deployment in resource-constrained environments, and manual tuning of proxy decision thresholds across domains. Future work will prioritize extending capabilities to deep learning frameworks, developing adaptive threshold mechanisms, and investigating sophisticated memory architectures for MLOps integration. Additionally, the current experimental setup utilizes a Random Forest baseline to provide a highly controlled, easily interpretable environment for isolating the cognitive architecture's individual components. While this simplifies the task compared to modern ML pipelines, extending KC-Agent to natively interface with complex deep learning architectures represents a primary focus for future work.

\section{Conclusion}

We present KC-Agent, a dual-process cognitive architecture that addresses fundamental challenges of automated ML model improvement under data drift. Integrating Kahneman's dual-process theory with atomic improvement principles, our approach balances rapid response and systematic enhancement in real-world scenarios.

KC-Agent achieves superior performance (76.8\% accuracy) while maintaining exceptional efficiency (13.2 seconds, 847 tokens). Qualitative evaluation using a consensus-based LLM-as-judge framework with three state-of-the-art models confirms the strategic superiority of our approach, with high inter-rater agreement (low standard deviations) validating the methodology. KC-Agent achieved the highest Smartness score (8.33/10) among all agents, significantly outperforming baselines including CodeAct (5.93/10), ReAct (5.33/10), and Plan-Execute (1.40/10). Ablation studies demonstrate that the dual-process architecture is essential, as neither system alone matches the integrated approach. Our work provides theoretical foundations and empirical validation for cognitive-inspired automated ML improvement systems, demonstrating readiness for practical deployment across diverse domains.

\section*{Acknowledgment}
We thank Lenovo for providing the technical infrastructure to run the experiments in this paper, and we thank Cotiviti for their support of this research. This work was partially supported by Lenovo and Intel® as part of the Lenovo AI Innovators University Research program, by Spanish Ministry of Science (MICINN), the Research State Agency (AEI) and European Regional Development Funds (ERDF/FEDER) under contract PID2024-160996OB-I00, MICIU/AEI/10.13039/501100011033, and by the Generalitat de Catalunya (AGAUR) under contract 2021-SGR-00478.

\bibliographystyle{IEEEtran}
\bibliography{IEEEabrv,biblio}

@misc{monitoring_ml_models,
      title={Monitoring Machine Learning Models: Online Detection of Relevant Deviations}, 
      author={Florian Heinrichs},
      year={2023},
      eprint={2309.15187},
      archivePrefix={arXiv},
      primaryClass={cs.LG},
      url={https://arxiv.org/abs/2309.15187}, 
}

@INPROCEEDINGS{eck2022monitoring,
  author={Eck, Bradley and Kabakci-Zorlu, Duygu and Chen, Yan and Savard, France and Bao, Xiaowei},
  booktitle={Proceedings of the 2022 IEEE International Conference on Big Data, Osaka, Japan}, 
  title={A monitoring framework for deployed machine learning models with supply chain examples}, 
  year={2022},
  volume={},
  number={},
  pages={2231--2238},
  doi={10.1109/BigData55660.2022.10020394}
}

@misc{chen2021evaluatinglargelanguagemodels,
      title={Evaluating Large Language Models Trained on Code}, 
      author={Mark Chen and Jerry Tworek and Heewoo Jun and Qiming Yuan and Henrique Ponde de Oliveira Pinto and Jared Kaplan and Harri Edwards and Yuri Burda and Nicholas Joseph and Greg Brockman and Alex Ray and Raul Puri and Gretchen Krueger and Michael Petrov and Heidy Khlaaf and Girish Sastry and Pamela Mishkin and Brooke Chan and Scott Gray and Nick Ryder and Mikhail Pavlov and Alethea Power and Lukasz Kaiser and Mohammad Bavarian and Clemens Winter and Philippe Tillet and Felipe Petroski Such and Dave Cummings and Matthias Plappert and Fotios Chantzis and Elizabeth Barnes and Ariel Herbert-Voss and William Hebgen Guss and Alex Nichol and Alex Paino and Nikolas Tezak and Jie Tang and Igor Babuschkin and Suchir Balaji and Shantanu Jain and William Saunders and Christopher Hesse and Andrew N. Carr and Jan Leike and Josh Achiam and Vedant Misra and Evan Morikawa and Alec Radford and Matthew Knight and Miles Brundage and Mira Murati and Katie Mayer and Peter Welinder and Bob McGrew and Dario Amodei and Sam McCandlish and Ilya Sutskever and Wojciech Zaremba},
      year={2021},
      eprint={2107.03374},
      archivePrefix={arXiv},
      primaryClass={cs.LG},
      url={https://arxiv.org/abs/2107.03374}, 
}

@inproceedings{lipton2018detecting,
  author       = {Zachary C. Lipton and
                  Yu{-}Xiang Wang and
                  Alexander J. Smola},
  title        = {Detecting and Correcting for Label Shift with Black Box Predictors},
  booktitle    = {Proceedings of the 35th International Conference on Machine Learning,
                  {ICML} 2018, Stockholmsm{\"{a}}ssan, Stockholm, Sweden, July
                  10-15, 2018},
  series       = {Proceedings of Machine Learning Research},
  volume       = {80},
  pages        = {3128--3136},
  publisher    = {{PMLR}},
  year         = {2018},
}

@inproceedings{generative_agents,
author = {Park, Joon Sung and O'Brien, Joseph and Cai, Carrie Jun and Morris, Meredith Ringel and Liang, Percy and Bernstein, Michael S.},
title = {{Generative Agents: Interactive Simulacra of Human Behavior}},
year = {2023},
isbn = {9798400701320},
publisher = {Association for Computing Machinery},
doi = {10.1145/3586183.3606763},
booktitle = {Proceedings of the 36th Annual ACM Symposium on User Interface Software and Technology},
articleno = {2},
numpages = {22},
location = {San Francisco, CA, USA},
series = {UIST'23}
}

@inproceedings{zhou2024selfdiscoverlargelanguagemodels,
author = {Zhou, Pei and Pujara, Jay and Ren, Xiang and Chen, Xinyun and Cheng, Heng-Tze and Le, Quoc V. and Chi, Ed H. and Zhou, Denny and Mishra, Swaroop and Zheng, Huaixiu Steven},
title = {Self-discover: large language models self-compose reasoning structures},
year = {2024},
isbn = {9798331314385},
publisher = {Curran Associates Inc.},
booktitle = {Proceedings of the 38th International Conference on Neural Information Processing Systems},
articleno = {4004},
numpages = {27},
location = {Vancouver, BC, Canada},
series = {NIPS'24}
}

@article{gao2023largelanguagemodelsempowered,
  title={Large Language Models Empowered Agent-based Modeling and Simulation: A Survey and Perspectives},
  author={Chen Gao and Xiaochong Lan and Nian Li and Yuan Yuan and Jingtao Ding and Zhilun Zhou and Fengli Xu and Yong Li},
  journal={Humanities and Social Sciences Communications},
  year={2024},
  volume={11},
  articleno = {1259},
  doi={10.1057/s41599-024-03611-3}
}

@inproceedings{shinn2023reflexion,
author = {Shinn, Noah and Cassano, Federico and Gopinath, Ashwin and Narasimhan, Karthik and Yao, Shunyu},
title = {Reflexion: language agents with verbal reinforcement learning},
year = {2023},
publisher = {Curran Associates Inc.},
booktitle = {Proceedings of the 37th International Conference on Neural Information Processing Systems},
articleno = {377},
numpages = {19},
location = {New Orleans, LA, USA},
series = {NIPS'23}
}

@inproceedings{yao2022react,
  author       = {Shunyu Yao and
                  Jeffrey Zhao and
                  Dian Yu and
                  Nan Du and
                  Izhak Shafran and
                  Karthik R. Narasimhan and
                  Yuan Cao},
  title        = {{ReAct: Synergizing Reasoning and Acting in Language Models}},
  booktitle    = {Proceedings of the Eleventh International Conference on Learning Representations,
                  {ICLR} 2023, Kigali, Rwanda, May 1-5},
  year         = {2023}
}

@inproceedings{wang2023plan,
    title = "Plan-and-Solve Prompting: Improving Zero-Shot Chain-of-Thought Reasoning by Large Language Models",
    author = "Wang, Lei  and
      Xu, Wanyu  and
      Lan, Yihuai  and
      Hu, Zhiqiang  and
      Lan, Yunshi  and
      Lee, Roy Ka-Wei  and
      Lim, Ee-Peng",
    booktitle = "Proceedings of the 61st Annual Meeting of the Association for Computational Linguistics (Volume 1: Long Papers)",
    month = jul,
    year = "2023",
    address = "Toronto, Canada",
    publisher = "Association for Computational Linguistics",
    doi = "10.18653/v1/2023.acl-long.147",
    pages = "2609--2634"
}

@book{kahneman2011thinking,
  author = {Kahneman, Daniel},
  publisher = {Farrar, Straus \& Giroux},
  title = {Thinking, fast \& slow},
  year = 2011
}

@book{clear2018atomic,
  title={Atomic Habits: An Easy \& Proven Way to Build Good Habits \& Break Bad Ones},
  author={Clear, J.},
  isbn={9780735211292},
  url={https://jamesclear.com/atomic-habits},
  year={2018},
  publisher={Penguin Publishing Group}
}

@misc{liu2023agentbenchevaluatingllmsagents,
      title={{AgentBench: Evaluating LLMs as Agents}}, 
      author={Xiao Liu and Hao Yu and Hanchen Zhang and Yifan Xu and Xuanyu Lei and Hanyu Lai and Yu Gu and Hangliang Ding and Kaiwen Men and Kejuan Yang and Shudan Zhang and Xiang Deng and Aohan Zeng and Zhengxiao Du and Chenhui Zhang and Sheng Shen and Tianjun Zhang and Yu Su and Huan Sun and Minlie Huang and Yuxiao Dong and Jie Tang},
      year={2023},
      eprint={2308.03688},
      archivePrefix={arXiv},
      primaryClass={cs.AI},
      url={https://arxiv.org/abs/2308.03688}, 
}

@inproceedings{deng2023mind2webgeneralistagentweb,
author = {Deng, Xiang and Gu, Yu and Zheng, Boyuan and Chen, Shijie and Stevens, Samuel and Wang, Boshi and Sun, Huan and Su, Yu},
title = {{MIND2WEB: towards a generalist agent for the web}},
year = {2023},
publisher = {Curran Associates Inc.},
booktitle = {Proceedings of the 37th International Conference on Neural Information Processing Systems},
articleno = {1220},
numpages = {24},
location = {New Orleans, LA, USA},
series = {NIPS'23}
}

@inproceedings{lin2023swiftsagegenerativeagentfast,
author = {Lin, Bill Yuchen and Fu, Yicheng and Yang, Karina and Brahman, Faeze and Huang, Shiyu and Bhagavatula, Chandra and Ammanabrolu, Prithviraj and Choi, Yejin and Ren, Xiang},
title = {{SWIFTSAGE: a generative agent with fast and slow thinking for complex interactive tasks}},
year = {2023},
publisher = {Curran Associates Inc.},
booktitle = {Proceedings of the 37th International Conference on Neural Information Processing Systems},
articleno = {1034},
numpages = {13},
location = {New Orleans, LA, USA},
series = {NIPS'23}
}

@misc{christakopoulou2024agentsthinkingfastslow,
      title={Agents Thinking Fast and Slow: A Talker-Reasoner Architecture}, 
      author={Konstantina Christakopoulou and Shibl Mourad and Maja Matarić},
      year={2024},
      eprint={2410.08328},
      archivePrefix={arXiv},
      primaryClass={cs.AI},
      url={https://arxiv.org/abs/2410.08328}, 
}

@article{zoller2021benchmarksurveyautomatedmachine,
      title={Benchmark and Survey of Automated Machine Learning Frameworks}, 
      author={Marc-André Zöller and Marco F. Huber},
      year={2021},
      journal={Journal of Artificial Intelligence Research (JAIR)},
      volume={70},
      pages={409--472},
      doi={10.1613/jair.1.11854}}

@misc{diazrodriguez2018dontforgetforgettingnew,
      title={Don't forget, there is more than forgetting: new metrics for Continual Learning}, 
      author={Natalia Díaz-Rodríguez and Vincenzo Lomonaco and David Filliat and Davide Maltoni},
      year={2018},
      eprint={1810.13166},
      archivePrefix={arXiv},
      primaryClass={cs.AI},
      url={https://arxiv.org/abs/1810.13166}, 
}

@article{Kirkpatrick_2017,
   title={Overcoming catastrophic forgetting in neural networks},
   volume={114},
   ISSN={1091-6490},
   DOI={10.1073/pnas.1611835114},
   number={13},
   journal={Proceedings of the National Academy of Sciences},
   author={Kirkpatrick, James and Pascanu, Razvan and Rabinowitz, Neil and Veness, Joel and Desjardins, Guillaume and Rusu, Andrei A. and Milan, Kieran and Quan, John and Ramalho, Tiago and Grabska-Barwinska, Agnieszka and Hassabis, Demis and Clopath, Claudia and Kumaran, Dharshan and Hadsell, Raia},
   year={2017},
   doi = {10.1073/pnas.1611835114},
   month=mar, 
   pages={3521–3526} 
}

@misc{Riemer18,
      title={Learning to Learn without Forgetting by Maximizing Transfer and Minimizing Interference}, 
      author={Matthew Riemer and Ignacio Cases and Robert Ajemian and Miao Liu and Irina Rish and Yuhai Tu and Gerald Tesauro},
      year={2019},
      eprint={1810.11910},
      archivePrefix={arXiv},
      primaryClass={cs.LG},
      url={https://arxiv.org/abs/1810.11910}, 
}

@inproceedings{Buzzega20,
author = {Buzzega, Pietro and Boschini, Matteo and Porrello, Angelo and Abati, Davide and Calderara, Simone},
title = {Dark experience for general continual learning: a strong, simple baseline},
year = {2020},
isbn = {9781713829546},
publisher = {Curran Associates Inc.},
booktitle = {Proceedings of the 34th International Conference on Neural Information Processing Systems},
articleno = {1335},
numpages = {11},
location = {Vancouver, BC, Canada},
series = {NIPS'20}
}

@article{self_adapative_systems,
author = {Gheibi, Omid and Weyns, Danny and Quin, Federico},
title = {Applying Machine Learning in Self-adaptive Systems: A Systematic Literature Review},
year = {2021},
publisher = {Association for Computing Machinery},
address = {New York, NY, USA},
volume = {15},
number = {3},
issn = {1556-4665},
doi = {10.1145/3469440},
journal = {ACM Trans. Auton. Adapt. Syst.},
month = aug,
articleno = {9},
numpages = {37}
}

@ARTICLE{chen2023aimaintenancerobustnessperspective,
  author={Chen, Pin-Yu and Das, Payel},
  journal={Computer}, 
  title={{AI Maintenance: A Robustness Perspective}}, 
  year={2023},
  volume={56},
  number={2},
  pages={48-56},
  doi={10.1109/MC.2022.3218005}}

@inproceedings{yao2023treethoughtsdeliberateproblem,
author = {Yao, Shunyu and Yu, Dian and Zhao, Jeffrey and Shafran, Izhak and Griffiths, Thomas L. and Cao, Yuan and Narasimhan, Karthik},
title = {Tree of thoughts: deliberate problem solving with large language models},
year = {2023},
publisher = {Curran Associates Inc.},
booktitle = {Proceedings of the 37th International Conference on Neural Information Processing Systems},
articleno = {517},
numpages = {14},
location = {New Orleans, LA, USA},
series = {NIPS'23}
}

@inproceedings{wang2024codeact,
author = {Wang, Xingyao and Chen, Yangyi and Yuan, Lifan and Zhang, Yizhe and Li, Yunzhu and Peng, Hao and Ji, Heng},
title = {Executable code actions elicit better LLM agents},
year = {2024},
publisher = {JMLR.org},
booktitle = {Proceedings of the 41st International Conference on Machine Learning},
articleno = {2054},
numpages = {25},
location = {Vienna, Austria},
series = {ICML'24}
}

@misc{kim2025scaling,
      title={Towards a Science of Scaling Agent Systems}, 
      author={Yubin Kim and Ken Gu and Chanwoo Park and Chunjong Park and Samuel Schmidgall and A. Ali Heydari and Yao Yan and Zhihan Zhang and Yuchen Zhuang and Mark Malhotra and Paul Pu Liang and Hae Won Park and Yuzhe Yang and Xuhai Xu and Yilun Du and Shwetak Patel and Tim Althoff and Daniel McDuff and Xin Liu},
      year={2025},
      eprint={2512.08296},
      archivePrefix={arXiv},
      primaryClass={cs.AI},
      url={https://arxiv.org/abs/2512.08296}, 
}

\end{document}